\documentclass[11pt]{article}
\usepackage[final]{acl}
\usepackage{times}
\usepackage[T1]{fontenc}
\usepackage{amsmath,amssymb,amsthm}
\usepackage{booktabs}
\usepackage{graphicx}
\graphicspath{{figures/}}
\usepackage{microtype}
\usepackage{listings}
\usepackage{xcolor}
\usepackage{url}

\newtheorem{proposition}{Proposition}

\newcommand{\ratk}[1]{R@#1}
\newcommand{\feone}{\texttt{fusion-embedding-1}}
\newcommand{\fetwo}{\texttt{fusion-embedding-2}}

\title{Modality-Gated Deep Adapters: Adding a Modality to a Frozen\\
Embedding Model with Exact Preservation}

\author{%
  Abdul Basit Tonmoy$^{1,2,3}$\thanks{\,Corresponding author.} \quad
  Kazi Fardinul Hoque$^{2}$ \quad
  Md.\ Shahrier Islam Arham$^{1,2}$ \quad
  Arman Luthra$^{1,2}$ \\[3pt]
  $^{1}$Eximius Labs \quad $^{2}$Wabash College \quad
  $^{3}$Skop Intelligence Co. \\[3pt]
  \texttt{atonmoy27@wabash.edu} \quad \texttt{kfhfar@amazon.com} \\[2pt]
  \texttt{marham27@wabash.edu} \quad \texttt{aluthra26@wabash.edu}}

\begin{document}
\maketitle

\begin{abstract}
Multimodal embedding models are deployed at scale: retrieval indices,
benchmark results, and behavioral audits all depend on the base model's
exact outputs. Extending such a model to a new modality with existing
parameter-efficient methods silently changes those outputs; LoRA-style
adaptation rewrites the text path whether or not the weights are merged,
invalidating every stored embedding. We propose \emph{modality-gated
deep adapters}: bottleneck adapters attached to every decoder layer of a
frozen multimodal embedding LLM, grouped into per-modality packs that
execute only while their own modality is being encoded. The result is a
modality added with zero change to existing outputs: inputs no pack
claims traverse the base model's own computation graph, bit-for-bit
unchanged, and co-loaded packs compose with an exact-zero isolation
matrix. Both properties are stated as propositions, hold after arbitrary
training rather than only at initialization, require no task labels or
routing metadata at inference, and are verified by exact-equality tests
on the released checkpoints. On one frozen 2B base, the audio pack
(injected as connector tokens) improves audio-to-text \ratk{10} by
$+3.4$ to $+5.4$ points over an identically trained control, positive at
every seed and reproduced at eleven times the data; the thermal pack,
reusing the base's own frozen vision path, clears its pre-registered
acceptance gate roughly sevenfold at every seed and lifts
thermal-to-text \ratk{10} from 0.224 to 0.785. A motivating encoder swap
locates the missing capacity: an external audio encoder that outranks
Whisper-family encoders in CLAP-style comparisons loses by 16 \ratk{10}
points inside the frozen LLM, so the capacity belongs in the layers,
exactly where the gated adapters place it. We release the audio model,
the thermal pack, and the training, evaluation, and invariance test
suites: models at \url{https://huggingface.co/EximiusLabs}, code on
GitHub.
\end{abstract}

\begin{figure}[t]
\centering
\includegraphics[width=\columnwidth]{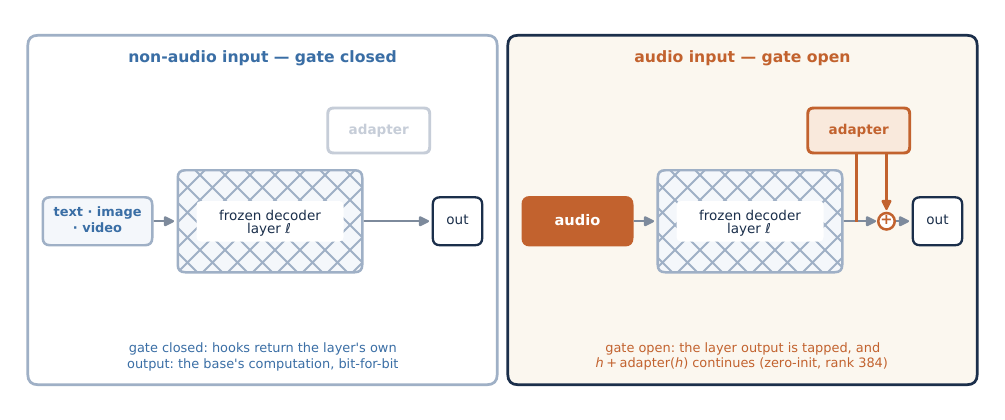}
\caption{One frozen decoder layer with a gated adapter. Left: on an
input whose modality no pack claims (text, image, video), the hook
returns the frozen layer's output before any adapter arithmetic
executes, so the computation graph is the released base model's own and
the output is bit-for-bit the base's
(Proposition~\ref{prop:exact}); ungated methods such as LoRA instead
execute on this path and change it. Right: on an input of the pack's
own modality (here audio), the bottleneck adapter is added to the
residual stream and trained. Each added modality carries its own pack
and gate, keyed on the input rather than on a task label, and
co-loaded packs are mutually invisible, so packs compose with
exact-zero isolation (Table~\ref{tab:compose}).}
\label{fig:mechanism}
\end{figure}

\section{Introduction}
\label{sec:intro}

An embedding model in production is not just weights; it is a contract.
A deployed embedding model's outputs are indexed at scale, benchmark
tables record its published behavior, and downstream systems are tuned
to its geometry.
When such a model gains a new modality, the operative question is not
only how well the new modality performs but what happens to that
contract. Retraining produces a new model and a full re-indexing bill.
Parameter-efficient adaptation appears gentler, but it is not: LoRA
updates, merged or unmerged, execute on every input, so the adapted
model is a different function on the modalities users already depend on
\citep{hu2022lora,omniembedaudio2026}. The change is silent; nearest-neighbor
structure shifts without any error being raised.

Freezing the base is the obvious response, and a family of methods
freezes most of it. Yet frozen-backbone systems in the literature treat
freezing as a compute saving, not a behavioral guarantee.
Omni-Embed-Audio \citep{omniembedaudio2026} keeps its backbone frozen but
trains LoRA in the attention of every layer; the adapters fire on the
text path as well, and the deployed function on text is far from the
base's (the untrained backbone retrieves at roughly chance, the adapted
one near state of the art, so the text path was necessarily rewritten).
Methods that install zero-initialized new capacity, such as gated
cross-attention in Flamingo \citep{alayrac2022flamingo} and block
expansion in LLaMA~Pro \citep{wu2024llamapro}, guarantee identity only at
initialization; the new modules train away from zero on all inputs.
Such methods report drift on the base's own tasks once the added modules
train, so a preservation property that holds only at initialization does
not survive continued training.

This paper presents a mechanism whose preservation guarantee is exact and
permanent. \emph{Modality-gated deep adapters} add a bottleneck adapter
to the residual stream of every decoder layer of a frozen multimodal
embedding LLM, and a binary gate, keyed on the presence of audio in the
input rather than on any task label, decides whether the adapter branch
executes at all (Figure~\ref{fig:mechanism}). While audio is encoded the
adapters are active and
trained; for any input without audio every adapter hook returns the
frozen layer's output before any adapter arithmetic, so the executed
computation graph is the base model's own and the output is bit-for-bit
identical to the released base (Proposition~\ref{prop:exact}). The
guarantee is a property of the
computation, not of the training trajectory: it holds after arbitrary
training, forever, and it is machine-verified at three points:
exact-equality unit tests, a parameter-drift assertion after every
training run, and a release smoke test on the packaged public
checkpoint.

Three further results make the mechanism more than a safety feature.
First, a motivating probe (\S\ref{sec:probe}) shows that the natural
alternative, upgrading the external audio encoder, fails: an encoder
that ranks above Whisper-family encoders in CLAP-style comparisons loses
by 16 \ratk{10} points inside the frozen-LLM architecture. The
bottleneck is in-layer capacity, precisely what the gated adapters add.
Second, a controlled experiment (\S\ref{sec:gate-exp}) isolates the
mechanism: with data, recipe, and connector held fixed, gated adapters
improve audio-to-text \ratk{10} at every seed tested (paired deltas
$+3.4$ to $+5.4$), the gain scales with
adapter rank, and its point estimate reproduced at eleven times the
data. The released model improves the point estimate in every
text-to-audio retrieval cell on AudioCaps, Clotho, and VGGSound over its
non-adapter predecessor. Third, the construction is not an
audio-specific trick (\S\ref{sec:thermal}): a thermal pack added to the
same frozen base through a different injection route, the base's own
vision path, clears its pre-registered gate at every seed, and
co-loading the audio and thermal packs reproduces the base bitwise on
every text, image, and video readout.

Our contributions:
\begin{itemize}
\item \textbf{Mechanism.} Modality-gated deep adapters: post-hoc
  extension of a frozen multimodal embedding LLM to new modalities, with
  per-modality packs of in-layer trainable capacity that execute only on
  their own modality's inputs (Figure~\ref{fig:mechanism},
  \S\ref{sec:method}). We add audio and thermal to one frozen
  base through two different injection routes and show the packs compose
  (\S\ref{sec:thermal}).
\item \textbf{Property and verification.} Exact preservation, stated as
  Propositions~\ref{prop:exact} and~\ref{prop:exclusive}
  (computation-graph identity for inputs no pack claims,
  with bitwise equality as the verified corollary) and enforced by
  exact-equality tests, gradient-isolation tests, a multi-pack
  exclusivity invariant, and a released smoke test
  (Tables~\ref{tab:verify} and~\ref{tab:compose}); the guarantee is
  input-conditional and permanent, which is
  strictly stronger than identity at initialization
  \citep{alayrac2022flamingo,wu2024llamapro} or task-labeled routing
  \citep{rusu2016pnn,zhang2020sidetuning} (\S\ref{sec:property},
  \S\ref{sec:preservation-results}).
\item \textbf{Controlled evidence, two modalities.} A same-recipe audio
  experiment ($+3.4$ \ratk{10}, rank-scaled, reproduced at full scale;
  Table~\ref{tab:gate}), a
  caption-free thermal probe that clears its pre-registered gate at every
  seed (\S\ref{sec:thermal}), and a publishable negative (a
  leaderboard-stronger audio encoder
  degrades the system by 16 \ratk{10} points; Table~\ref{tab:towerswap}),
  which together locate the
  frozen-LLM bottleneck in the layers, not the encoder.
\item \textbf{Artifact.} The released \fetwo{} audio model and the
  \texttt{ember} thermal pack (each 2--3\% trained parameters over a
  byte-frozen 2B base), training and evaluation code, and the invariance
  and composability test suites
  (\S\ref{sec:conclusion}).
\end{itemize}

\section{Is the audio encoder the bottleneck?}
\label{sec:probe}

The architecture we extend routes audio through a frozen encoder tower
and a small trained connector into a frozen decoder-LM embedding model
(details in \S\ref{sec:setup}). Its audio-text retrieval trails
specialist dual encoders, and the most natural diagnosis is the audio
encoder: the tower is Whisper-derived
\citep{radford2022whisper,xu2025qwen25omni}, and controlled comparisons
under shallow projections rank Whisper-family encoders well below
sound-event encoders for sound retrieval. GLAP's encoder study
\citep{dinkel2025glap} puts CED-Base at 58.6 AudioCaps text-to-audio
mAP@10, Dasheng at 55.8, and Whisper-Base at 46.5 under one recipe.

We ran the implied experiment. Two arms, identical in data (45K
AudioCaps-only), steps (800), recipe, and connector, differ only in the
frozen tower supplying audio frames: the base's own Qwen2.5-Omni tower
(Whisper-family, co-trained to feed a Qwen LM) or Dasheng-base
\citep{dinkel2024dasheng}, the versatile winner of GLAP's comparison.
The external ranking inverts inside the splice: the Omni tower reaches
audio-to-text \ratk{10} 0.631 against 0.469 for Dasheng, with the same
ordering in the reverse direction (0.684 vs.\ 0.538 text-to-audio;
Table~\ref{tab:towerswap} and Figure~\ref{fig:probe}, protocol in
Appendix~\ref{app:probe}). The
towers are not parameter-matched (86M vs.\ ${\sim}$640M), so the probe
does not isolate co-training from scale; what it establishes is that
encoder rankings measured under shallow projections do not transfer to
architectures that splice tokens into a frozen LLM. Two adjacent
observations reinforce the conclusion: a linear probe of the Omni tower
on ESC-50 \citep{piczak2015esc50} rises monotonically to its last layer
(0.924, with a learned average over all layers worse at 0.893; from
training-run logs), so there is no hidden mid-stack feature to tap
\citep{gong2023whisperat}; and the one controlled same-connector study we
know of finds that adapting a Whisper-family encoder buys nothing for
audio captioning (LoRA 44.8 vs.\ frozen 44.8 SPIDEr-FL, full fine-tuning
worse at 44.0) while a frozen event encoder reaches 49.6
\citep{interspeech2024cedwhisper}.

\begin{table}[t]
\centering
\small
\setlength{\tabcolsep}{3.4pt}
\begin{tabular}{lcc}
\toprule
Tower & A$\to$T \ratk{10} & T$\to$A \ratk{10} \\
\midrule
Qwen2.5-Omni (${\sim}$640M) & \textbf{0.631} & \textbf{0.684} \\
Dasheng-base (86M) & 0.469 & 0.538 \\
\bottomrule
\end{tabular}
\caption{An encoder that outranks Whisper-family towers in
shallow-projection comparisons \citep{dinkel2025glap} loses by 16
points inside the frozen-LLM splice. Matched 45K/800-step arms,
identical recipe, each arm on frames from its own frozen tower. The
towers are not parameter-matched; the probe establishes non-transfer of
the external ranking, not encoder superiority per parameter.}
\label{tab:towerswap}
\end{table}

\begin{figure}[t]
\centering
\includegraphics[width=\columnwidth]{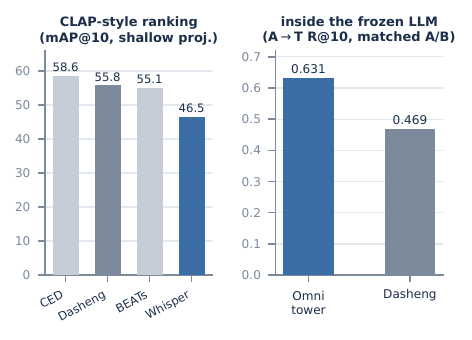}
\caption{The ranking inversion behind Table~\ref{tab:towerswap}. Left:
under shallow projections, Dasheng outranks Whisper-family encoders on
sound retrieval (AudioCaps text-to-audio mAP@10, as reported by
\citealp{dinkel2025glap}). Right: spliced into the frozen LLM under a
matched recipe, the base's co-trained Whisper-family tower wins by 16
\ratk{10} points. External encoder rankings do not transfer to this
architecture class.}
\label{fig:probe}
\end{figure}

If the encoder is not the bottleneck, the natural remaining suspect is the frozen
LM itself: every layer of it was trained on text and images, and all
audio understanding must squeeze through a 16.4M input-side connector
into layers that never learned to process audio. Capacity must be added
inside the layers, on the audio path. Adding it the standard way (LoRA
in the attention) rewrites the model on every input; adding it behind a
modality gate does not. That is the mechanism.

\section{Modality-gated deep adapters}
\label{sec:method}

This section defines the mechanism and proves its central property,
exact preservation (Proposition~\ref{prop:exact}): an input no pack
claims executes the base model's own computation graph, so its output
is bit-for-bit the released base's.

\subsection{Setup and notation}
\label{sec:setup}

The base model is Qwen3-VL-Embedding-2B
\citep{qwen3vlembedding}, a 28-layer decoder-LM embedding model with
hidden width $d=2048$, last-token pooling, and Matryoshka output
truncation \citep{kusupati2022mrl}; it embeds text, images, and video in
one space and is kept byte-frozen throughout (a regression guard asserts
that no base parameter changed after every run). Audio enters through
the frozen Qwen2.5-Omni audio tower \citep{xu2025qwen25omni} and a
trained 16.4M perceiver-resampler connector that writes 64 audio tokens
into the base's input stream at placeholder positions; this
connector-only configuration is the released \feone{}~\citep{tonmoy2026fusion} and serves as the
architecture our adapters extend. Training is symmetric InfoNCE
\citep{oord2018infonce} over audio-caption pairs against cached frozen
text targets. Let $h_\ell \in \mathbb{R}^{n \times d}$ be the hidden
states after decoder layer $\ell$, and let $g(x) \in \{0,1\}$ indicate
whether input $x$ contains audio tokens: $g$ is a fixed function of the
input's composition, not a learned or task-supplied router.

\subsection{The mechanism}
\label{sec:mechanism}

At every decoder layer $\ell$ we attach a bottleneck adapter
\citep{houlsby2019adapters}
\begin{equation}
A_\ell(h) = W^{\mathrm{up}}_\ell\,
  \sigma\!\bigl(W^{\mathrm{down}}_\ell\, \mathrm{LN}(h)\bigr),
\label{eq:adapter}
\end{equation}
with $W^{\mathrm{down}}_\ell \in \mathbb{R}^{r \times d}$,
$W^{\mathrm{up}}_\ell \in \mathbb{R}^{d \times r}$, $\sigma$ the SiLU
nonlinearity, and $\mathrm{LN}$ a LayerNorm. The adapter output is added
to the residual stream, but only through a gate:
\begin{equation}
h_\ell \leftarrow
\begin{cases}
h_\ell + A_\ell(h_\ell) & \text{if } g(x) = 1,\\[2pt]
h_\ell & \text{if } g(x) = 0.
\end{cases}
\label{eq:gate}
\end{equation}
The gate is implemented as a depth-counted context manager held open by
the model wrapper around audio encodes; the adapters are attached with
forward hooks and owned by the wrapper, never registered under the base
module, so the base's parameter snapshot and \texttt{state\_dict} remain
adapter-free. Figure~\ref{fig:mechanism} shows the mechanism; the
implementation of the gated hook is short enough to give in full
(Figure~\ref{fig:code}). At rank $r=384$ over 28 layers the adapters
total 44.2M parameters, which together with the 16.4M connector gives
60.6M trained parameters, about 3\% of the 2B base.

\begin{figure}[t]
\begin{lstlisting}[language=Python]
def _make_hook(adapter, gate):
  def hook(_module, _inputs, output):
    if not gate.active:
      # keep original output:
      # bitwise no-op
      return None
    if isinstance(output, tuple):
      h = output[0]
      return ((h + adapter(h),)
              + tuple(output[1:]))
    return output + adapter(output)
  return hook
\end{lstlisting}
\caption{The gated forward hook from the released code (type
annotations and comments shortened; indentation compressed for the
column). When the gate is closed the hook returns before any adapter
computation, so the frozen layer's output is used unchanged; the tuple
branch handles decoder layers that return \texttt{(hidden, ...)}.}
\label{fig:code}
\end{figure}

\subsection{Exact preservation}
\label{sec:property}

The mechanism generalizes past a single modality. A \emph{pack} $m$ is a
full per-layer adapter stack (Eq.~\ref{eq:adapter}) with its own binary
gate $g_m(x)$; several packs may share the same frozen layers, each
registering its own gated hook. The released model carries an audio pack,
and \S\ref{sec:thermal} adds a thermal pack to the same base. Let
$f_{\mathrm{base}}$ be the released base and $f$ the model with any set of
packs attached.

\begin{proposition}[Exact preservation]
\label{prop:exact}
For every input $x$ with $g_m(x)=0$ for all packs $m$, $f$ executes
the same operations as $f_{\mathrm{base}}$, on the same weights, in the
same order; hence $f(x) = f_{\mathrm{base}}(x)$ exactly.
\end{proposition}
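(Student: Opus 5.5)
The plan is to argue at the level of the executed computation rather than the function computed, by induction over the forward pass of the frozen decoder. First I will make precise what "the operations executed by $f$" are. Attaching packs does three things: it registers, for each pack $m$ and layer $\ell$, a forward hook on decoder layer $\ell$ that closes over the adapter $A^{(m)}_\ell$ and the gate $g_m$. It adds no module, parameter, or buffer to the base, since the adapters are owned by the wrapper and the base \texttt{state\_dict} is unchanged. It leaves the base weights byte-frozen. So the program run by $f$ on $x$ is the base program with one extra step interposed after each layer's forward: invoking that layer's registered hooks in registration order.

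The induction is over layers $\ell=0,\dots,L$. The hypothesis is that the hidden state $h_\ell$ produced by $f$ equals, bitwise, the one produced by $f_{\mathrm{base}}$, and was produced by the same operation sequence. For the base case, embedding and input assembly are untouched by packs, since hooks live only on decoder layers. For the step, layer $\ell+1$ receives identical inputs and identical weights, so its forward runs the same kernels in the same order and returns the same output object. Each hook then checks its gate. Under the hypothesis $g_m(x)=0$ for all $m$, every hook takes the early-return branch of Figure~\ref{fig:code} and returns \texttt{None}. Under the framework's hook semantics this means the layer output is kept unchanged, so no adapter arithmetic and no addition to the residual stream occurs. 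This is exactly the $g(x)=0$ case of Eq.~\eqref{eq:gate}, applied once per pack. Pooling and Matryoshka truncation after the last layer are again base code on identical inputs, which gives $f(x)=f_{\mathrm{base}}(x)$ exactly.

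The main obstacle is justifying "same operations" at the bitwise level, not just mathematical equality. Floating-point results depend on kernel choice, reduction order, and memory layout. I will handle this by noting that a hook returning \texttt{None} neither copies nor re-lays out the tensor. Therefore the downstream kernels see the identical tensor object and dispatch identically. I also need that $g_m$ is a deterministic function of the input's composition, evaluated before or independently of any adapter state, so training cannot change which branch runs. Evaluating $g_m$ and the gate check are control flow only; they touch no tensor entering the base computation. Given a fixed execution environment, identical operation sequences on identical bits then yield identical bits. This is the step where I would state the assumption explicitly: same hardware and library, and deterministic kernels, as for $f_{\mathrm{base}}$ itself.
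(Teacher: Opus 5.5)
Your proposal is correct and follows the paper's own argument: the packs' forward hooks are the only modification, and with every gate closed each hook returns before any adapter arithmetic or cast, so the executed graph is $f_{\mathrm{base}}$'s own and the outputs coincide exactly. Your layer-by-layer induction and your explicit fixed-execution-environment assumption spell out what the paper's short proof leaves implicit; the paper states that assumption separately, in its scope remark on matched inference configuration.
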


\begin{proof}
The only modifications to the base are the packs' layer hooks. With every
gate closed, each hook returns the frozen layer's
output before any adapter operation enters the computation
(Eq.~\ref{eq:gate}, Figure~\ref{fig:code}); no adapter parameter,
activation, or cast of any pack participates. The executed computation
graph is therefore $f_{\mathrm{base}}$'s own, and the outputs coincide
exactly.
\end{proof}

A companion invariant makes the multi-pack case precise.

\begin{proposition}[Single-gate exclusivity]
\label{prop:exclusive}
Fix an input $x$ and suppose exactly one pack $m^\star$ has
$g_{m^\star}(x)=1$ while every other gate is closed. Then $f(x)$ equals
the output of the model carrying pack $m^\star$ alone: co-loaded packs are
mutually invisible, because each closed pack's hook is a bitwise no-op.
\end{proposition}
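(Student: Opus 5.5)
The plan is to reduce Proposition~\ref{prop:exclusive} to the same hook-level argument used for Proposition~\ref{prop:exact}, applied one pack at a time. Fix $x$ and let $M$ be the set of co-loaded packs, with $g_{m^\star}(x)=1$ and $g_m(x)=0$ for all $m\in M\setminus\{m^\star\}$. At each decoder layer $\ell$ the model $f$ applies the frozen layer and then the registered hooks of all packs in registration order. I will show by induction on $\ell$ that the hidden states $h_\ell$ produced by $f$ on $x$ coincide bitwise with those produced by the model $f_{m^\star}$ carrying pack $m^\star$ alone. Since pooling and Matryoshka truncation are frozen and shared, the outputs then coincide.

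For the inductive step, assume the inputs to layer $\ell$ agree, so the frozen layer outputs agree by determinism of the same operations on the same weights. Now pass through the hook chain. By Eq.~\ref{eq:gate} and Figure~\ref{fig:code}, a hook whose gate is closed returns \texttt{None} before any adapter arithmetic. That is the same bitwise no-op used in the proof of Proposition~\ref{prop:exact}, so the output handed to the next hook is exactly the one it received. Deleting every closed hook from the chain therefore changes nothing. What remains is the single open hook of $m^\star$, applied to the frozen layer's output, which is precisely the chain in $f_{m^\star}$. Hence $h_\ell$ agrees, and the induction closes. The base case is the shared embedding and connector input.

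The main obstacle is making sure that closed packs are invisible in every channel, not just in the returned tensor. Three things need checking. \emph{First}, gate state must be per-pack: opening $m^\star$'s depth-counted context must not open another pack's gate, which holds because each pack owns its own gate object. \emph{Second}, a pack's hook must not mutate shared state such as in-place edits, RNG draws or casts while closed; the early return guarantees this. \emph{Third}, the position of $m^\star$'s hook among the closed hooks must not matter, which follows because the closed hooks are identities.

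I would also note one point about the claim itself. Bitwise equality here is relative to $f_{m^\star}$ executed with the same kernels, so it needs the same determinism assumption as Proposition~\ref{prop:exact}. No assumption about the trained values of any adapter is needed, so the claim holds after arbitrary training of all packs.
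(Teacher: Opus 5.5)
Your proposal is correct and is essentially the paper's own argument. The paper gives no separate proof; its only justification is the clause in the statement that each closed pack's hook returns before any adapter arithmetic and is therefore a bitwise no-op, exactly as in the proof of Proposition~\ref{prop:exact}, and your layer-by-layer induction (with the checks that closed hooks have no side effects and that the position of $m^\star$'s hook in the chain is irrelevant) makes that reasoning explicit.
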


We scope the invariant deliberately to \emph{at most one gate active per
forward}. A pathological input that opened two gates at once would have
both packs add to the residual stream, an interaction we neither train nor
claim; because each modality is encoded through its own entry point (below),
this case does not arise in normal use and we exclude it
(\S\ref{sec:limitations}). What the invariant covers is the deployment mix:
an audio encode fires only the audio pack, a thermal encode only the thermal
pack, and any text, image, or video forward fires neither, reproducing the
base bitwise (\S\ref{sec:thermal}, Table~\ref{tab:compose}).

Two scope statements make the claim precise. First, bitwise output
equality follows under any fixed inference configuration (same weights,
precision, kernels, and batch composition); across configurations the
model is exactly as deterministic as the base, because on inputs no pack
claims it \emph{is} the base's computation. Second, the property is
input-conditional: a sequence that a pack does claim is adapted
as a whole, including its text tokens. Such inputs are new behavior by
definition; the base model has no audio (or thermal) interface, so there
is no base behavior to regress. We deliberately state the property as
``the branch never executes'' rather than ``the branch adds zero,'' since
$h + 0 \cdot A(h)$ would still change the graph and invite
floating-point concerns; a hook that returns early does not.

Each gate is \emph{declared by the encode entry point} rather than inferred
from tensor contents. This matters most for thermal: a thermal image is a
single channel replicated to three, byte-indistinguishable from an ordinary
RGB image at the vision interface, so no content test could route it. The
thermal encode is instead the only code path that opens the thermal gate,
and it closes it on exit; every image, video, and text embed runs outside
that scope with the gate closed. For audio the declaration is reinforced by
content: audio tokens are present only on the audio path, and the public
text-encode and image-embed entry points hard-fail if invoked while the
audio gate is open (in the training and released inference code
respectively). The released inference guard covers the audio gate; thermal
preservation rests on the encode-scoping discipline just described, which
the isolation matrix of \S\ref{sec:thermal} verifies bitwise. The gates
$g_m$ in Propositions~\ref{prop:exact} and~\ref{prop:exclusive} are thus
properties the encode entry points guarantee, not heuristics over inputs.

Proposition~\ref{prop:exact} is stronger than the preservation notions
in prior expansion methods on two independent axes. On the \emph{time}
axis, zero-initialized capacity \citep{alayrac2022flamingo,
bachlechner2021rezero,wu2024llamapro} matches the base only at
initialization; our guarantee is invariant to training, because training
only ever touches the gated branch. On the \emph{routing} axis,
progressive networks \citep{rusu2016pnn} and side-tuning
\citep{zhang2020sidetuning} preserve frozen components but select the
active path with task labels supplied at inference; our gate is a
function of the input alone. Unmerged LoRA preserves base
\emph{parameters}, but its adapters execute on every path, so the
deployed function changes on all inputs \citep{hu2022lora,
omniembedaudio2026}; parameter preservation without computation
preservation is the weaker guarantee, and the distinction is exactly
what re-indexing costs measure. Section~\ref{sec:preservation-results}
verifies the property empirically, including the negative control (with
the gate open, outputs change; that visible change is what ungated
adaptation applies to every input silently).

\subsection{Identity at initialization}
\label{sec:identity}

To ensure that at initialization the adapted model produces exactly the
outputs of the base model on \emph{all} inputs, including audio, the
up-projections $W^{\mathrm{up}}_\ell$ are zero-initialized, following the
near-identity principle of \citet{houlsby2019adapters} and the
zero-init gating of \citet{alayrac2022flamingo} and
\citet{bachlechner2021rezero}: at step zero
$A_\ell(h) = 0$ and the audio path behaves as the connector-only
architecture, so training starts from the predecessor model rather than
from a perturbed one, which stabilizes optimization and makes
warm-starting from a connector-only checkpoint well-defined. The two
properties are complementary and should not be conflated: identity at
initialization holds for all inputs and is temporary; exact preservation
holds for inputs that no pack claims (audio-free and thermal-free) and is
permanent.

\subsection{The gate must span forward and backward}
\label{sec:gc}

One implementation subtlety carries the guarantee in practice. With
gradient checkpointing, layer forwards re-run during the backward pass.
A gate that is open for the forward pass but closed by backward time
silently drops the adapters from the recomputed graph: gradients are
wrong and no error is raised, unless the checkpointing implementation
detects the mismatch. We therefore hold the gate open across forward and
backward of every audio step (the depth-counted design makes the
trainer's scope and the encoder's inner scope compose), rely on
non-reentrant checkpointing to raise loudly on any mismatch, and lock the
failure mode in with a dedicated test. The subtlety is also evidence
that the property has content: naive gating does not provide the
guarantee, and a correct implementation must be verified, which is why
the test suite of \S\ref{sec:preservation-results} exists. A second
guard closes the loop from the other side: a text encode issued while
the gate is open raises immediately, so the frozen text targets used by
the contrastive objective can never be silently adapted.

\subsection{Design choices}
\label{sec:design}

Adapters attach at every layer because the probe of \S\ref{sec:probe}
locates the deficit in the frozen stack generally, and because layer
subsets were not needed to pass the pre-registered gate; layer-placement
ablations remain open (\S\ref{sec:limitations}). The rank is chosen
empirically ($r=384$; \S\ref{sec:gate-exp} sweeps it). The adapter
computes in fp32 and casts back to the stream's dtype for stability at
bf16 scale. The gate keys on audio because audio is the modality being
added; nothing in the construction is audio-specific, and the same
gating applies to any modality appended to a frozen base.

\section{Experiments}
\label{sec:experiments}

\subsection{Setup}
\label{sec:exp-setup}

\paragraph{Training.} All runs pair the frozen base and tower with the
16.4M connector; adapter runs add gated adapters per
\S\ref{sec:method}. The controlled experiment trains on a 45K
AudioCaps-only corpus \citep{kim2019audiocaps} for 800 steps; full-scale
runs train on 518{,}183 audio-caption pairs (a 592K web corpus after
removing 73{,}716 clips whose metadata carried no sound content) for
3{,}900 steps at effective batch 1{,}024, with a full-corpus frozen-text
negative bank and large-scale loss terms. What that corpus does and does
not teach the audio path, and why its structure rather than its size
governs which attributes survive, is analyzed
separately~\citep{tonmoy2026axis}. A 400-step in-domain
AudioCaps fine-tune follows, as in the predecessor. Precision lineage
is matched end to end (train, fine-tune, and score at one precision);
mixing lineages costs about 2 points and is a documented trap. Full
hyperparameters are in Appendix~\ref{app:hyper}.

\paragraph{Evaluation.} Two protocols are used, never mixed. The
\emph{in-run protocol} is the automated AudioCaps rescore every training
job runs at its own precision; it is used for matched-arm comparisons.
The \emph{release protocol} (bf16, the base's native input template,
five-reference min-rank scoring on AudioCaps-883; Clotho v2.1
\citep{drossos2020clotho} strictly zero-shot; VGGSound-696
\citep{chen2020vggsound} for cross-modal cells) is used for released
checkpoints. Clotho never appears in training; VGGSound is blacklisted
from ingestion, and 5{,}929 Clotho-overlapping FreeSound clip ids are
excluded from the web corpus. At these pool sizes a recall proportion
carries a binomial standard error of roughly 1.4--1.5 points, so we read
single deltas below ${\sim}2$ points as parity. The controlled gate
experiment is replicated across three seeds (\S\ref{sec:gate-exp}); the
full-scale pretrain pair and the release checkpoints are single runs.

\subsection{The controlled gate experiment}
\label{sec:gate-exp}

The causal question is whether gated in-layer capacity improves audio
retrieval when everything else is held fixed. Three arms share data,
steps, recipe, connector, and tower, and differ in one axis
(Table~\ref{tab:gate}): no adapters, gated adapters at rank 128, gated
adapters at rank 384. The acceptance gate was pre-registered at
$\geq{+}3$ audio-to-text \ratk{10}.

\begin{table}[t]
\centering
\footnotesize
\setlength{\tabcolsep}{2.2pt}
\begin{tabular}{@{}lccccc@{}}
\toprule
Arm & Trained & \ratk{1} & \ratk{10} & T$\to$A & Loss \\
\midrule
Control (no adapt.) & 16.4M & 0.202 & 0.631 & 0.684 & 4.70 \\
\ + gated, $r{=}128$ & 31.2M & 0.217 & 0.656 & --- & --- \\
\ + gated, $r{=}384$ & 60.6M & \textbf{0.229} & \textbf{0.665} &
\textbf{0.708} & \textbf{4.46} \\
\bottomrule
\end{tabular}
\caption{Gated in-layer capacity passes the pre-registered acceptance
gate ($\geq{+}3$ \ratk{10}) at $+3.4$, improves every measured
direction, and has not saturated at the largest rank trained. The
controlled experiment: 45K AudioCaps-only, 800 steps, identical recipe,
one change per row; in-run protocol; audio-to-text unless marked; final
training loss in the last column. Rank 384 is the released
configuration. Dashes: not recorded.}
\label{tab:gate}
\end{table}

Gated adapters at rank 384 improve audio-to-text \ratk{10} by $+3.4$
points and every other measured cell, with a lower final training loss;
rank 128 recovers $+2.5$, so the gain scales with adapter capacity and
had not saturated. Both deltas exceed the ${\sim}2$-point single-seed
parity threshold of \S\ref{sec:exp-setup}. The comparison isolates the
adapters as the only changed component; it does not equalize
trained-parameter count across arms. Two observations address the
budget question directly. First, spending a comparable increment at the
input instead does not help: in the predecessor's width study, growing
the connector from $d_r{=}384$ to $d_r{=}512$ matched or
\emph{reduced} held-out retrieval at the two scales tested (0.479
vs.\ 0.481 at 131K; 0.675 vs.\ 0.717 \ratk{10} at 484K) despite better
training metrics. Second, that width increase and the rank-128 arm are
similarly sized parameter increments (11.8M and 14.8M); spent in-layer
behind the gate the increment gains $+2.5$, spent on connector width it
gains nothing. Where the parameters live matters more than how many
there are: input-side width was already past its knee, while in-layer
capacity was absent entirely.

\paragraph{Seed replication.} Re-running both arms of
Table~\ref{tab:gate} with a seeded variant of the identical recipe (the
seed enters the training run's resume key; the original arms predate the
seed parameter) replicates the effect. Across three seeds the paired
audio-to-text \ratk{10} delta is $+3.4$/$+4.6$/$+5.4$ (mean $+4.5$;
control 0.631/0.608/0.595, adapters 0.665/0.655/0.649); text-to-audio
\ratk{10} deltas are $+2.4$/$+4.6$/$+5.2$ and audio-to-text \ratk{1}
deltas $+2.7$/$+3.9$/$+5.1$. Every retrieval direction is positive at
every seed, and the pairing published in Table~\ref{tab:gate} is the
most conservative of the three.

At eleven times the data the point estimate reproduces. With the
518{,}183-pair filtered corpus and large-scale loss terms, the adapter
pretrain reaches in-run \ratk{10} 0.708 against 0.674 for the closest
available baseline, the 592K raw-corpus run without adapters or the
loss terms (matched evaluation protocol): $+3.4$ again. The full-scale
pair differs on three axes (adapters, loss terms, and corpus
filtering); the controlled isolation is Table~\ref{tab:gate}, and the
full-scale pair shows the gain's magnitude survives scale and recipe
maturation.

\subsection{Release results: the family delta}
\label{sec:release-results}

The released adapter model improves the point estimate in every
text-to-audio cell over its connector-only predecessor under the
release protocol (Table~\ref{tab:family}). The pattern is
directional and consistent with the mechanism: in-layer capacity lets
the audio representation organize for discrimination against text
queries (\ratk{10}: AudioCaps $+2.9$, Clotho $+2.2$,
VGGSound $+3.6$; VGGSound \ratk{1} $+5.3$; the AudioCaps \ratk{1} gain
of $+1.2$ and the Clotho \ratk{1} gain of $+1.5$ are below the parity
threshold), at a small cost in
audio-to-text top-1 on the caption benchmarks. Audio-to-text \ratk{10}
on AudioCaps is statistical parity (0.743 vs.\ 0.741). Against external systems the
released model remains behind specialist dual encoders on in-domain
audio-text retrieval (GLAP reports 0.544/0.911 AudioCaps audio-to-text
\ratk{1}/\ratk{10} \citep{dinkel2025glap}); those systems train both
towers on audio-text data and serve exactly one modality pair, and no
result in this paper claims otherwise. What the adapter model uniquely
retains is the rest of the space: its text, image, and video behavior is
the base model's, bitwise, which no specialist or LoRA-adapted system
provides.

\begin{table}[t]
\centering
\footnotesize
\setlength{\tabcolsep}{2.6pt}
\begin{tabular}{@{}lcc@{}}
\toprule
Cell & Predecessor & Ours \\
\midrule
\multicolumn{3}{l}{\emph{AudioCaps}} \\
\ A$\to$T \ratk{1} / \ratk{10} & \textbf{0.332} / 0.741$^{\dagger}$ &
  0.302 / \textbf{0.743}$^{\dagger}$ \\
\ T$\to$A \ratk{1} / \ratk{10} & 0.280$^{\dagger}$ / 0.746 &
  \textbf{0.292}$^{\dagger}$ / \textbf{0.775} \\
\midrule
\multicolumn{3}{l}{\emph{Clotho (zero-shot)}} \\
\ A$\to$T \ratk{1} / \ratk{10} & \textbf{0.135}$^{\dagger}$ /
  \textbf{0.433}$^{\dagger}$ &
  0.127$^{\dagger}$ / 0.421$^{\dagger}$ \\
\ T$\to$A \ratk{1} / \ratk{10} & 0.136$^{\dagger}$ / 0.460 &
  \textbf{0.151}$^{\dagger}$ / \textbf{0.482} \\
\midrule
\multicolumn{3}{l}{\emph{VGGSound-696}} \\
\ A$\to$T \ratk{1} / \ratk{10} & \textbf{0.213}$^{\dagger}$ / 0.625 &
  0.211$^{\dagger}$ / \textbf{0.665} \\
\ T$\to$A \ratk{1} / \ratk{10} & 0.213 / 0.645 &
  \textbf{0.266} / \textbf{0.681} \\
\bottomrule
\end{tabular}
\caption{The adapters improve the point estimate in every text-to-audio
cell and the VGGSound audio-to-text \ratk{10}, and cede audio-to-text
\ratk{1} on the caption benchmarks. Release-protocol comparison against
the connector-only predecessor (\feone{} v0.3), matched bf16 lineage
and native input template; Clotho strictly zero-shot. Bold: better
point estimate per cell; $^{\dagger}$marks both members of a pair whose
difference is below the ${\sim}2$-point single-seed parity threshold
(\S\ref{sec:exp-setup}), read as parity.}
\label{tab:family}
\end{table}

The emergent cross-modal direction is also present. With zero audio-image
training pairs, audio-to-image \ratk{10} on VGGSound reaches 0.443 for the
adapter pretrain (bare input template) and 0.392 for its native-template
fine-tuned successor, both far above chance. We do not read a gain from
the predecessor's native-template release into this, since template
differences alone shift retrieval by about two points; the point is only
that the adapters retain the emergent audio-image bridge that the
audio-to-text alignment rides on.

\subsection{Preservation verification}
\label{sec:preservation-results}

Every check that must return zero returns exactly zero, and every
control that must differ does (Table~\ref{tab:verify}); all checks are
released with the code.

\begin{table}[t]
\centering
\footnotesize
\setlength{\tabcolsep}{3pt}
\begin{tabular}{@{}p{0.78\columnwidth}r@{}}
\toprule
Check & Result \\
\midrule
Base parameter drift, max $|\Delta|$, every training run & 0 \\
Text forward $\Delta$ vs.\ base, gate closed (unit test) & 0 \\
Text/image $\Delta$, hooks attached vs.\ removed, released
checkpoint (smoke test) & 0 \\
Zero-init adapter stack under an \emph{open} gate & 0 \\
Audio forward $\Delta$, trained adapters, gate open & $\neq 0$ \\
Gradients reaching base or tower parameters & none \\
Gate not spanning checkpointed backward & raises \\
\bottomrule
\end{tabular}
\caption{Verification of Proposition~\ref{prop:exact} and its
supporting machinery. Zeros are exact (maximum absolute difference under
matched execution), not tolerances. The open-gate row is the negative
control: outputs must change when the branch executes, and that visible,
gated change is what ungated adaptation applies to every input.
Artifact-level detail in Appendix~\ref{app:verification}.}
\label{tab:verify}
\end{table}

The exact-equality unit tests encode identical text through the
base and through the adapter-attached model and assert bitwise-equal
activations (maximum absolute difference zero, not small). The release
smoke test embeds identical text and image inputs with the adapter
hooks attached (gate closed) and with the hooks removed, through the
public loading path, and requires bitwise-equal outputs on both
modalities. The regression guard's drift assertion returned zero on
every training run reported here. The controls that must differ, do: an
open gate changes audio-gated forwards, and only adapter and connector
parameters receive gradients. A downstream consequence makes the
guarantee visible in retrieval tables: text-image retrieval cells for
the predecessor and the adapter model are identical because the
underlying vectors are identical to the last bit, and the base's
published text/image/video benchmark results
\citep{qwen3vlembedding,meng2025vlm2vecv2} remain true of the extended
model by construction rather than by re-measurement.

\section{A second modality: thermal}
\label{sec:thermal}

The construction claims to be modality-agnostic. We test that claim by
adding a second, deliberately dissimilar modality to the same frozen base
through a \emph{different injection route}, and by verifying that the two
packs coexist without disturbing each other or the base.

\paragraph{A second injection route.} Audio enters as connector tokens: an
external tower and a trained resampler write new tokens into the input
stream. Thermal imagery enters through the base's \emph{own} frozen vision
path. A single-channel thermal frame is replicated to three channels and
processed by the unchanged Qwen3-VL vision encoder and projector; the
capacity that the frozen decoder lacks for thermal content is supplied by a
thermal pack, a per-layer \texttt{GatedAdapter} stack (Eq.~\ref{eq:adapter})
registered through the multi-pack registry with its own gate. No new tower,
no new tokens. The two routes exercise the mechanism differently: the audio
gate can key on the presence of audio tokens, but a replicated thermal image
is byte-identical to an RGB image at the vision interface, so the thermal
gate \emph{must} be declared by the encode entry point (\S\ref{sec:property}).
Thermal is therefore not ``the same trick twice''; it stresses the part of
the design that does not depend on the new modality being content-detectable.

\paragraph{Controlled gate probe.} Mirroring the audio experiment
(\S\ref{sec:gate-exp}), a caption-free probe isolates the thermal pack: two
arms identical in data, steps, and recipe (LLVIP \citep{jia2021llvip}
thermal-to-visible-twin retrieval, rank 384, 800 steps) differ only in
whether the gated thermal pack is attached. The acceptance gate was
pre-registered at $\geq{+}3$ \ratk{10}. The pack lifts thermal-to-image
\ratk{10} from a frozen $0.165$ to $0.385$/$0.388$/$0.390$ across three
seeds, a paired $\Delta$ of $+22.0$ to $+22.5$ points, clearing the
pre-registered gate by roughly sevenfold at every seed; the zero-shot LLVIP
person classifier is unchanged (95.4, $\Delta=0$ every seed), confirming the
pack adds capacity without disturbing the frozen vision path it rides on.

\paragraph{Released thermal pack (Ember).} The deployed pack is trained on
IR-TD, a corpus of 61{,}320 real thermal images with descriptive captions
(FLIR-resolution frames stripped for release hygiene), thermal-to-text
InfoNCE against cached frozen text targets, rank 384, 3{,}900 steps, three
seeds (Table~\ref{tab:thermal}). Thermal-to-text \ratk{10} on a held-out
split rises from $0.224$ (frozen base) to $0.777$--$0.785$; the emergent
thermal-to-visible direction, trained on no thermal-RGB pairs, more than
doubles over the frozen base ($0.165 \to 0.333$--$0.349$). With the thermal pack active, the base's zero-shot
LLVIP classifier is not degraded (95.4 vs.\ 94.1/94.3/91.8 across seeds, a
one to four point spread); this is a robustness observation on the
gate-open thermal readout, not the preservation guarantee. Exact
preservation is the separate bitwise check: with the pack gates closed,
text and RGB forwards are bitwise-identical to the base
(Table~\ref{tab:compose}). A caption
ablation matches the recaptioning lesson of the family report: at 5{,}000
steps, full descriptive captions reach in-domain \ratk{10} $0.843$ and
preserve cross-domain transfer (LLVIP twin $0.343$), whereas short
class-name captions reach only $0.595$ in-domain and collapse transfer to
$0.089$, below the frozen baseline. Caption richness matters on both axes.

\begin{table}[t]
\centering
\footnotesize
\setlength{\tabcolsep}{3pt}
\begin{tabular}{@{}lcc@{}}
\toprule
Thermal cell & Frozen & Ember (3 seeds) \\
\midrule
T$\to$text \ratk{10} & 0.224 & 0.783/0.785/0.777 \\
T$\to$text \ratk{1} & 0.071 & 0.399/0.412/0.404 \\
T$\to$visible \ratk{10}$^{\ast}$ & 0.165 & 0.333/0.349/0.341 \\
LLVIP ZS acc.\ (\%) & 95.4 & 94.1/94.3/91.8 \\
\bottomrule
\end{tabular}
\caption{The released thermal pack (Ember) lifts thermal-to-text
\ratk{10} from 0.224 to 0.777--0.785 and more than doubles the emergent
thermal-to-visible direction, at every seed. Three seeds versus the
frozen base; thermal-to-text is the IR-TD held-out split; the frozen
column is the same base with the pack's zero-initialized up-projection
(exact identity). $^{\ast}$Thermal-to-visible (LLVIP twin) is emergent:
no thermal-RGB pairs appear in training. The zero-shot classifier row
shows the pack does not degrade the base's thermal zero-shot
classification (a gate-open readout); exact preservation of non-thermal
inputs is the bitwise check in Table~\ref{tab:compose}.}
\label{tab:thermal}
\end{table}

\paragraph{Composability: an exact isolation matrix.} The two packs are
co-loaded on the same frozen decoder layers exactly as they would be served
together: the released audio pack through the shipped single-gate attachment,
the released thermal pack through the multi-pack registry. We then embed one
input per modality and compare, bitwise, against the reference that
Proposition~\ref{prop:exact} or~\ref{prop:exclusive} predicts, reporting the
maximum absolute difference (Table~\ref{tab:compose}). Every preserved cell
is exactly zero, including the video readout, which no prior thermal check
had covered; the audio pack driven through the registry code path also
matches the shipped path to the last bit. As a negative control the thermal
readout with its gate open differs from the frozen base by $0.10$, so the
zeros are not a dead pack. The matrix is the machine-checked form of the
deployment claim: adding thermal to the audio model changes nothing a user
already depends on.

\begin{table}[t]
\centering
\footnotesize
\setlength{\tabcolsep}{3pt}
\begin{tabular}{@{}llc@{}}
\toprule
Readout & Gate(s) open & max $|\Delta|$ \\
\midrule
Text & none & 0 \\
Image (RGB) & none & 0 \\
Video & none & 0 \\
Audio & audio & 0 \\
Thermal & thermal & 0 \\
\midrule
Audio, registry path & audio & 0 \\
Thermal (control) & thermal, vs.\ base & $0.10$ \\
\bottomrule
\end{tabular}
\caption{Every preserved readout is exactly zero: the released audio and
thermal packs, co-loaded on one frozen base, reproduce their reference
bit-for-bit. Each readout is compared
to the reference the propositions predict (text/image/video to the raw base,
audio to the audio-pack-only model, thermal to the thermal-pack-only model);
max $|\Delta|$ is the exact maximum absolute difference, $0$ meaning
bitwise-identical. The last two rows are checks: the audio pack through the
registry vs.\ the shipped single-gate path (must be $0$), and the thermal
readout with its gate open vs.\ the frozen base (must be nonzero, confirming
the pack is active). Mixed inputs that would open two gates in one forward
are outside the guarantee and untested (\S\ref{sec:limitations}).}
\label{tab:compose}
\end{table}

\section{Analysis}
\label{sec:analysis}

\subsection{Why does a stronger encoder hurt?}
\label{sec:why-encoder}

The probe of \S\ref{sec:probe} is not a claim that Dasheng is a weak
encoder; on its own benchmarks it is strong. The frozen LLM, however, is
not a generic readout: it consumes token streams shaped like the
distribution its co-training established, and the Omni tower was trained
jointly with a Qwen LM to produce exactly such streams. A stronger but
foreign encoder presents features the frozen layers never learned to
parse, and no capacity exists downstream to compensate (that capacity is
what \S\ref{sec:method} adds). The reading is consistent with GLAP's own
finding that encoder rankings are recipe-relative
\citep{dinkel2025glap}, and with the captioning result that adapting the
encoder does not close the gap \citep{interspeech2024cedwhisper}.
Compatibility with the consumer dominates quality of the producer.

\subsection{What does the gate cost?}
\label{sec:cost}

On non-audio inputs, nothing: no added operations, no added latency, no
added memory beyond holding the (inactive) adapter weights, and exact
output stability, which is the point. On audio inputs the adapters add
one bottleneck per layer (60.6M trained parameters in total, about 3\%
of the 2B base); audio encoding was already the expensive path
through a 640M tower, and the adapter increment is minor by comparison.
At training time the practical cost is the discipline of
\S\ref{sec:gc}: the gate must be scoped across forward and backward, and
the text-encode guard forces cached text targets. We consider both
constraints features; each converts a silent failure mode into a loud
one.

\subsection{Relation to continual learning}
\label{sec:continual}

On its original modalities the extended model does not approximately
retain the base's behavior; it reproduces it exactly, because those
inputs execute the base's computation. In continual-learning terms, the
mechanism has exact recall of base-model outputs on audio-free inputs
by construction, echoing the frozen-parameter memory arguments of
\citet{houlsby2019adapters} and \citet{rusu2016pnn}, while
avoiding both the task-labeled routing those methods require and the
after-training drift of zero-init expansion \citep{wu2024llamapro}. We
scope the claim deliberately: it concerns embedding outputs on
audio-free inputs, not any broader notion of capability retention, and
mixed audio-text inputs are new behavior with no base counterpart.
Preservation language should be scoped precisely rather than asserted
broadly, and ours is checkable by running the released tests.

\section{Related work}
\label{sec:related}

\paragraph{Parameter-efficient adaptation.} Bottleneck adapters
\citep{houlsby2019adapters} and LoRA \citep{hu2022lora} add small
trained deltas to frozen models; both execute on every input, so the
adapted model is a different function everywhere. Our adapter is
Houlsby's module behind a modality gate; the gate, not the bottleneck,
is what produces the guarantee.

\paragraph{Gated injection into frozen LMs.} Flamingo interleaves
tanh-gated cross-attention into a frozen LM with $\alpha=0$ at
initialization \citep{alayrac2022flamingo}, following zero-init
residual principles \citep{bachlechner2021rezero}; the gates train away
from zero for all inputs. Frozen-tower bridges such as BLIP-2 train a
querying transformer between frozen unimodal models
\citep{li2023blip2}; the deployed embedding function is the bridge's,
not the base's, for every input. Our gate is binary and input-keyed
rather than learned, which is why it can be exact.

\paragraph{Frozen-backbone multimodal embedders.} Closest in setting is
jina-embeddings-v5-omni \citep{jina2026omni}, which keeps a text backbone
frozen and trains only projector-style connectors (about 0.35\% of
weights) to bind image, audio, and video into the text space. This is
preservation \emph{by omission}: nothing is added inside the frozen
layers, so the original space is untouched, but the new modalities are
limited to what shallow projections can align. Our packs preserve the
base equally (the closed-gate path \emph{is} the frozen computation) while
adding trainable capacity \emph{inside} the layers on the new modality's
path, and the audio probe of \S\ref{sec:probe} is direct evidence that
in-layer capacity, not a better projector, is what the frozen-LLM setting
is missing. X-InstructBLIP \citep{panagopoulou2024xinstructblip} aligns
several modalities (image, 3D, audio, video) to a frozen LLM through
per-modality Q-Formers on the input side; preservation there is trivial,
in that the backbone is never touched and no exactness property for an
unmodified path is stated or tested, and, as with all input-side
composition, the new modalities' quality is bounded by what tokens a
frozen stack can consume. Our contribution is complementary: an in-layer,
gated capacity whose bypass carries a machine-checked exactness guarantee
and whose packs compose (\S\ref{sec:thermal}). Both routes coexist in
the released family this work extends: inertial motion (Tremor) and
tactile pressure (Tactus) bind to the same frozen space through external
encoder-plus-projector heads, since a small external encoder suffices for
those signals, while audio and thermal take the in-layer route because
their inputs must traverse the frozen stack itself.\footnote{All packs
are released at \url{https://huggingface.co/EximiusLabs}.} Which route a
modality needs is decided by its path through the base; the packs here
supply the case the projector route cannot.

\paragraph{Model expansion and routing.} LLaMA~Pro's block expansion is
identity at initialization and drifts with training
\citep{wu2024llamapro}; Net2Net's function-preserving transforms are
deliberately a warm start \citep{chen2016net2net}; progressive networks
\citep{rusu2016pnn} and side-tuning \citep{zhang2020sidetuning} preserve
frozen components exactly but select paths with task labels at
inference. Multi-adapter serving systems such as S-LoRA
\citep{sheng2024slora} also execute adapters conditionally, hot-swapping
them per request; there the routing signal is deployment metadata
attached to the request, whereas our gate is a function of the input's
own token composition, and none of these systems states or verifies an
exactness property for the unadapted path. We obtain single-model
deployment with a guarantee that is exact, input-conditional, and
permanent. LLaMA~Pro's stated open problem, extending to modalities
while maintaining original ability, is the setting solved here for
embedding models.

\paragraph{Routing and modality experts.} Mixture-of-modality-experts
architectures such as VLMo \citep{bao2022vlmo} route tokens to
modality-specific expert FFNs, and mixture-of-experts routing generally
allocates per-token capacity; in all of these the experts are trained
jointly with the backbone from the start. Our adapters are post-hoc
additions to a frozen, already-deployed base, and the bypass path
carries an exactness guarantee rather than a learned routing decision.

\paragraph{Audio-text embedding models.} Specialist dual encoders set
the pace on audio-text retrieval
\citep{wu2023laionclap,mei2023wavcaps,dinkel2025glap,niizumi2025m2dclap},
and audio-native LLM retrievers push it further with in-backbone LoRA
and re-ranking \citep{omniembedaudio2026,aurola2026}; none serves
modalities beyond audio and text, and none preserves a base model's
behavior. Space-stitching methods \citep{wang2025omnibind,
wang2024freebind} combine pre-trained spaces with trained projectors,
changing every constituent's outputs. ImageBind
\citep{girdhar2023imagebind} and ONE-PEACE \citep{wang2023onepeace}
train unified spaces from scratch rather than extending a deployed one.

\section{Limitations}
\label{sec:limitations}

\paragraph{One base family, two modalities.} Evidence comes from two
modalities, audio and thermal, added to one 2B base through two different
injection routes (connector tokens and the frozen vision path), and the
two packs compose bitwise (\S\ref{sec:thermal}). What remains untested is
generality across base \emph{families}: every result uses the same frozen
Qwen3-VL backbone, so we do not claim the mechanism transfers to a
different embedding LLM without re-verification. Scaling past two
co-loaded packs is likewise unmeasured.

\paragraph{Audio ceiling.} Gains are bounded by what a frozen 2B LM can
do with audio tokens: specialist systems that train both towers remain
well ahead on in-domain audio-text \ratk{1}
(\S\ref{sec:release-results}).

\paragraph{Scope of the bitwise claim.} Bitwise equality is asserted
under matched inference configuration; across kernels or hardware the
model inherits exactly the base's determinism, no more. Functional
identity of the computation graph is unconditional.

\paragraph{Missing ablations.} Layer-subset placement, random-init
adapters, and a trained ungated-adapter arm were not run; the rank
sweep and fine-tune-length checks are the ablations we have
(Appendix~\ref{app:variants}). The controlled gate experiment is
replicated across three seeds with the effect positive in every
direction at every seed (\S\ref{sec:gate-exp}); the full-scale pretrain
pair and the release checkpoints remain single runs, read with the
standard-error convention of \S\ref{sec:exp-setup}.

\paragraph{Claimed-modality inputs are new behavior.} The guarantee
covers inputs no pack claims; a sequence a pack does claim (audio, or a
thermal image), including its text tokens, is adapted and has no base
counterpart. Inputs that would open two gates in one forward are outside
Proposition~\ref{prop:exclusive} and untested.

\section{Conclusion}
\label{sec:conclusion}

Modality-gated deep adapters add a modality to a frozen embedding model
while keeping the model, on every input it previously served, exactly
itself. The guarantee is architectural rather than empirical, survives
arbitrary training, needs no routing metadata, and is verified by
exact-equality tests shipped with the code; the same experiments that
establish it also show the mechanism is where frozen-LLM audio capacity
should go, improving the point estimate in every text-to-audio cell over
a connector-only predecessor after a motivating probe ruled out the
encoder. Adding thermal imagery through a second, dissimilar injection
route, and showing the two packs compose to an exact-zero isolation
matrix, is evidence that the construction is genuinely modality-agnostic
rather than an audio-specific trick. The
released artifacts are the \fetwo{} audio checkpoint (pinned revision
\texttt{v0.1-preview}) at
\url{https://huggingface.co/EximiusLabs/fusion-embedding-2-2b-preview},
the \texttt{ember} thermal pack at
\url{https://huggingface.co/EximiusLabs/fusion-embedding-2-ember},
with Apache-2.0 training and evaluation code, including the invariance
and composability test suites, at
\url{https://github.com/Eximius-Labs/fusion-embedding}.

\appendix

\section{Probe and controlled-experiment protocol}
\label{app:probe}

\paragraph{Tower swap (\S\ref{sec:probe}).} Both arms: 45K
AudioCaps-train-only corpus, 800 steps, identical connector
architecture, loss, optimizer, schedule, and batch size; each arm
ingests and evaluates on frames from its own frozen tower (Qwen2.5-Omni
audio tower vs.\ Dasheng-base). Audio-to-text cells are from the
standard in-run rescore; text-to-audio cells are from the run logs.
Hyperparameters were tuned on the Omni arm and reused unchanged on the
Dasheng arm.
GLAP's encoder numbers quoted for context (AudioCaps text-to-audio
mAP@10: CED-Base 58.6, BEATs 55.1, Dasheng 55.8, Whisper-Base 46.5) are
from their published study \citep{dinkel2025glap}.

\paragraph{Gate experiment (\S\ref{sec:gate-exp}).} Same corpus, steps,
and recipe as the tower-swap Qwen-tower arm, which doubles as the
control row of Table~\ref{tab:gate}. Adapter arms differ from the
control only in attaching gated adapters at the stated rank. The
$\geq{+}3$ \ratk{10} acceptance threshold was fixed in the program's
plan document before the runs. In-run protocol: the automated
AudioCaps rescore at the run's own precision; arms are compared only
within matched configuration. The seed replication of
\S\ref{sec:gate-exp} reran the control and rank-384 arms at two further
seeds each (seeding torch, numpy, and the loader shuffle; the seed is
part of the run's resume key); the full per-arm grids are in the
released result record \texttt{adapter\_probe\_multiseed.json}.

\paragraph{Full-scale reproduction.} 518{,}183-pair corpus (592K minus
73{,}716 junk-metadata clips), soft-label and false-negative-masking
loss terms, 3{,}900 steps, effective batch 1{,}024, full-corpus frozen
text bank; the baseline is the closest available one, the 592K
raw-corpus run without adapters or the loss terms (in-run \ratk{10}
0.674, matched evaluation protocol), so the pair differs on the three
axes noted in \S\ref{sec:gate-exp}. The release
lineage is bf16 end to end; a 4-bit lineage of the same recipe measures
about 2 points lower when scored at bf16, which is why lineages are
never mixed.

\section{Verification artifacts}
\label{app:verification}

The gate is \texttt{AdapterGate} in
\texttt{fusion\_embedding/adapters.py}: a depth-counted context manager
whose hooks return the frozen layer's output before any adapter
arithmetic when closed (Figure~\ref{fig:code}). Adapters are owned by
the model wrapper, so the base's parameter snapshot and
\texttt{state\_dict} never contain them. \texttt{tests/test\_adapters.py}
asserts: (i) bitwise text invariance through base and adapter-attached
model with the gate closed, and exact identity of a zero-initialized
stack under an open gate; (ii) audio-gated forwards change while a text
forward in the same process stays bit-identical; (iii) gradient
isolation (only adapter and connector parameters receive gradients);
(iv) checkpoint resume round-trips adapters and refuses
adapter-presence mismatches; (v) warm-start semantics from a
connector-only checkpoint; (vi) a gate that fails to span the re-run
forwards of gradient checkpointing raises \texttt{CheckpointError}
loudly. The training loop asserts \texttt{base\_drift} $= 0$ (maximum
absolute change over every base parameter) after every training run.
The release smoke test loads the packaged checkpoint from the public
repository and embeds identical text and image inputs twice, once with
the adapter hooks attached (gate closed) and once with every hook
removed, requiring \texttt{torch.equal} on both modalities; it also
exercises the audio path end to end through the public loading code.
The gate contract is enforced at the entry points on both sides of the
release: \texttt{encode\_text} in the training code and
\texttt{embed\_image} in the released inference code (the vision path
shares the hooked decoder layers) each raise if called under an open
gate, so neither cached text targets nor image embeddings can be
silently adapted.

\section{Variants and checks not run}
\label{app:variants}

In the spirit of reporting what was and was not tested: the rank sweep
(128, 384) and the fine-tune-length check (a 200-step arm scored
0.273/0.695 against 0.302/0.743 at 400 steps under the release-protocol
rescore of its lineage; a 600-step arm on the predecessor showed the
other side of the optimum) are the ablations we have. Not run: adapters on layer
subsets, randomly initialized (non-zero) adapters, a trained
ungated-adapter arm (its inference-time effect on non-audio inputs is
what Table~\ref{tab:verify}'s open-gate row makes visible), and
multi-seed replications. None of these is blocked by the design; they
were out of budget, and we state them rather than imply them.

\section{Hyperparameters}
\label{app:hyper}

Connector: perceiver-resampler, internal width 384, 64 latent queries,
6 blocks, 16.4M parameters. Adapters (Eq.~\ref{eq:adapter}): rank 384,
SiLU, LayerNorm-first
bottleneck, zero-init up-projection, fp32 compute cast back to stream
dtype, 44.2M parameters over 28 layers. Loss: symmetric InfoNCE at
every Matryoshka rung with a light covariance penalty; audio-to-text
negatives augmented with the full-corpus frozen-text bank; soft labels
($\beta{=}0.3$) and false-negative masking ($\tau{=}0.98$) at 500K+
scale. Optimizer AdamW, cosine decay, 5\% warmup. Training input
format is the base's native chat template; the format study and full
recipe details are in the companion technical report~\citep{tonmoy2026fusion}.


\begin{thebibliography}{39}
\providecommand{\natexlab}[1]{#1}

\bibitem[{Alayrac et~al.(2022)Alayrac, Donahue, Luc, Miech, Barr, Hasson, Lenc, Mensch, Millican, Reynolds et~al.}]{alayrac2022flamingo}
Jean-Baptiste Alayrac, Jeff Donahue, Pauline Luc, Antoine Miech, Iain Barr, Yana Hasson, Karel Lenc, Arthur Mensch, Katherine Millican, Malcolm Reynolds, et~al. 2022.
\newblock Flamingo: a visual language model for few-shot learning.
\newblock \emph{Advances in Neural Information Processing Systems (NeurIPS)}.
\newblock ArXiv:2204.14198.

\bibitem[{Bachlechner et~al.(2021)Bachlechner, Majumder, Mao, Cottrell, and McAuley}]{bachlechner2021rezero}
Thomas Bachlechner, Bodhisattwa~Prasad Majumder, Huanru~Henry Mao, Garrison~W. Cottrell, and Julian McAuley. 2021.
\newblock {ReZero} is all you need: Fast convergence at large depth.
\newblock In \emph{Uncertainty in Artificial Intelligence (UAI)}.
\newblock ArXiv:2003.04887.

\bibitem[{Bao et~al.(2022)Bao, Wang, Dong, Liu, Mohammed, Aggarwal, Som, and Wei}]{bao2022vlmo}
Hangbo Bao, Wenhui Wang, Li~Dong, Qiang Liu, Owais~Khan Mohammed, Kriti Aggarwal, Subhojit Som, and Furu Wei. 2022.
\newblock {VLMo}: Unified vision-language pre-training with mixture-of-modality-experts.
\newblock In \emph{Advances in Neural Information Processing Systems (NeurIPS)}.
\newblock ArXiv:2111.02358.

\bibitem[{Chen et~al.(2020)Chen, Xie, Vedaldi, and Zisserman}]{chen2020vggsound}
Honglie Chen, Weidi Xie, Andrea Vedaldi, and Andrew Zisserman. 2020.
\newblock Vggsound: A large-scale audio-visual dataset.
\newblock In \emph{IEEE International Conference on Acoustics, Speech and Signal Processing (ICASSP)}.
\newblock ArXiv:2004.14368.

\bibitem[{Chen et~al.(2016)Chen, Goodfellow, and Shlens}]{chen2016net2net}
Tianqi Chen, Ian Goodfellow, and Jonathon Shlens. 2016.
\newblock {Net2Net}: Accelerating learning via knowledge transfer.
\newblock In \emph{International Conference on Learning Representations (ICLR)}.
\newblock ArXiv:1511.05641.

\bibitem[{Dinkel et~al.(2026)Dinkel, Yan, Wang, Wang, Sun, Niu, Liu, Li, Zhang, and Luan}]{dinkel2025glap}
Heinrich Dinkel, Zhiyong Yan, Tianzi Wang, Yongqing Wang, Xingwei Sun, Yadong Niu, Jizhong Liu, Gang Li, Junbo Zhang, and Jian Luan. 2026.
\newblock Glap: General contrastive audio-text pretraining across domains and languages.
\newblock In \emph{IEEE International Conference on Acoustics, Speech and Signal Processing (ICASSP)}.
\newblock ArXiv:2506.11350.

\bibitem[{Dinkel et~al.(2024)Dinkel, Yan, Wang, Zhang, Wang, and Wang}]{dinkel2024dasheng}
Heinrich Dinkel, Zhiyong Yan, Yongqing Wang, Junbo Zhang, Yujun Wang, and Bin Wang. 2024.
\newblock Scaling up masked audio encoder learning for general audio classification.
\newblock In \emph{Proceedings of Interspeech}.
\newblock ArXiv:2406.06992.

\bibitem[{Drossos et~al.(2020)Drossos, Lipping, and Virtanen}]{drossos2020clotho}
Konstantinos Drossos, Samuel Lipping, and Tuomas Virtanen. 2020.
\newblock Clotho: an audio captioning dataset.
\newblock In \emph{IEEE International Conference on Acoustics, Speech and Signal Processing (ICASSP)}.
\newblock ArXiv:1910.09387. Evaluation split v2.1 from Zenodo record 4783391.

\bibitem[{Girdhar et~al.(2023)Girdhar, El-Nouby, Liu, Singh, Alwala, Joulin, and Misra}]{girdhar2023imagebind}
Rohit Girdhar, Alaaeldin El-Nouby, Zhuang Liu, Mannat Singh, Kalyan~Vasudev Alwala, Armand Joulin, and Ishan Misra. 2023.
\newblock Imagebind: One embedding space to bind them all.
\newblock In \emph{IEEE/CVF Conference on Computer Vision and Pattern Recognition (CVPR)}.
\newblock ArXiv:2305.05665.

\bibitem[{Gong et~al.(2023)Gong, Khurana, Karlinsky, and Glass}]{gong2023whisperat}
Yuan Gong, Sameer Khurana, Leonid Karlinsky, and James Glass. 2023.
\newblock Whisper-at: Noise-robust automatic speech recognizers are also strong general audio event taggers.
\newblock In \emph{Proceedings of Interspeech}.
\newblock ArXiv:2307.03183.

\bibitem[{H{\"o}nicke et~al.(2026)H{\"o}nicke, G{\"u}nther, Koukounas, Akram, Martens, Sturua, and Xiao}]{jina2026omni}
Florian H{\"o}nicke, Michael G{\"u}nther, Andreas Koukounas, Mohammad~Kalim Akram, Scott Martens, Saba Sturua, and Han Xiao. 2026.
\newblock jina-embeddings-v5-omni: Geometry-preserving embeddings via locked aligned towers.
\newblock \emph{arXiv preprint arXiv:2605.08384}.

\bibitem[{Houlsby et~al.(2019)Houlsby, Giurgiu, Jastrzebski, Morrone, de~Laroussilhe, Gesmundo, Attariyan, and Gelly}]{houlsby2019adapters}
Neil Houlsby, Andrei Giurgiu, Stanislaw Jastrzebski, Bruna Morrone, Quentin de~Laroussilhe, Andrea Gesmundo, Mona Attariyan, and Sylvain Gelly. 2019.
\newblock Parameter-efficient transfer learning for {NLP}.
\newblock In \emph{Proceedings of the 36th International Conference on Machine Learning (ICML)}.
\newblock ArXiv:1902.00751.

\bibitem[{Hu et~al.(2022)Hu, Shen, Wallis, Allen-Zhu, Li, Wang, and Chen}]{hu2022lora}
Edward~J. Hu, Yelong Shen, Phillip Wallis, Zeyuan Allen-Zhu, Yuanzhi Li, Shean Wang, and Weizhu Chen. 2022.
\newblock {LoRA}: Low-rank adaptation of large language models.
\newblock In \emph{International Conference on Learning Representations (ICLR)}.
\newblock ArXiv:2106.09685.

\bibitem[{Jia et~al.(2021)Jia, Zhu, Li, Tang, Liu, and Zhou}]{jia2021llvip}
Xinyu Jia, Chuang Zhu, Minzhen Li, Wenqi Tang, Shengjie Liu, and Wenli Zhou. 2021.
\newblock {LLVIP}: A visible-infrared paired dataset for low-light vision.
\newblock In \emph{IEEE/CVF International Conference on Computer Vision (ICCV) Workshops}.
\newblock ArXiv:2108.10831.

\bibitem[{Kim et~al.(2019)Kim, Kim, Lee, and Kim}]{kim2019audiocaps}
Chris~Dongjoo Kim, Byeongchang Kim, Hyunmin Lee, and Gunhee Kim. 2019.
\newblock Audiocaps: Generating captions for audios in the wild.
\newblock In \emph{Proceedings of NAACL-HLT}.

\bibitem[{Kusupati et~al.(2022)Kusupati, Bhatt, Rege, Wallingford, Sinha, Ramanujan, Howard-Snyder, Chen, Kakade, Jain, and Farhadi}]{kusupati2022mrl}
Aditya Kusupati, Gantavya Bhatt, Aniket Rege, Matthew Wallingford, Aditya Sinha, Vivek Ramanujan, William Howard-Snyder, Kaifeng Chen, Sham Kakade, Prateek Jain, and Ali Farhadi. 2022.
\newblock Matryoshka representation learning.
\newblock In \emph{Advances in Neural Information Processing Systems (NeurIPS)}.
\newblock ArXiv:2205.13147.

\bibitem[{Li et~al.(2023)Li, Li, Savarese, and Hoi}]{li2023blip2}
Junnan Li, Dongxu Li, Silvio Savarese, and Steven Hoi. 2023.
\newblock {BLIP-2}: Bootstrapping language-image pre-training with frozen image encoders and large language models.
\newblock In \emph{Proceedings of the 40th International Conference on Machine Learning (ICML)}.
\newblock ArXiv:2301.12597.

\bibitem[{Li et~al.(2026)Li, Zhang, Long, Chen, Song, Bai, Yang, Xie, Yang, Liu, Zhou, and Lin}]{qwen3vlembedding}
Mingxin Li, Yanzhao Zhang, Dingkun Long, Keqin Chen, Sibo Song, Shuai Bai, Zhibo Yang, Pengjun Xie, An~Yang, Dayiheng Liu, Jingren Zhou, and Junyang Lin. 2026.
\newblock Qwen3-vl-embedding and qwen3-vl-reranker: A unified framework for state-of-the-art multimodal retrieval and ranking.
\newblock \emph{arXiv preprint arXiv:2601.04720}.

\bibitem[{Liu et~al.(2024)Liu, Li, Zhang, Dinkel, Wang, Yan, Wang, and Wang}]{interspeech2024cedwhisper}
Jizhong Liu, Gang Li, Junbo Zhang, Heinrich Dinkel, Yongqing Wang, Zhiyong Yan, Yujun Wang, and Bin Wang. 2024.
\newblock Enhancing automated audio captioning via large language models with optimized audio encoding.
\newblock In \emph{Proceedings of Interspeech}.
\newblock ArXiv:2406.13275.

\bibitem[{Mei et~al.(2024)Mei, Meng, Liu, Kong, Ko, Zhao, Plumbley, Zou, and Wang}]{mei2023wavcaps}
Xinhao Mei, Chutong Meng, Haohe Liu, Qiuqiang Kong, Tom Ko, Chengqi Zhao, Mark~D. Plumbley, Yuexian Zou, and Wenwu Wang. 2024.
\newblock Wavcaps: A chatgpt-assisted weakly-labelled audio captioning dataset for audio-language multimodal research.
\newblock \emph{IEEE/ACM Transactions on Audio, Speech, and Language Processing}.
\newblock ArXiv:2303.17395.

\bibitem[{Meng et~al.(2025)}]{meng2025vlm2vecv2}
Rui Meng et~al. 2025.
\newblock {VLM2Vec-V2}: Advancing multimodal embedding for videos, images, and visual documents.
\newblock \emph{arXiv preprint arXiv:2507.04590}.
\newblock The MMEB-V2 benchmark.

\bibitem[{Niizumi et~al.(2025)Niizumi, Takeuchi, Ohishi, Harada, and Kashino}]{niizumi2025m2dclap}
Daisuke Niizumi, Daiki Takeuchi, Yasunori Ohishi, Noboru Harada, and Kunio Kashino. 2025.
\newblock M2d-clap: Exploring general-purpose audio-language representations beyond clap.
\newblock \emph{arXiv preprint arXiv:2503.22104}.

\bibitem[{Panagopoulou et~al.(2024)Panagopoulou, Xue, Yu, Li, Li, Joty, Xu, Savarese, Xiong, and Niebles}]{panagopoulou2024xinstructblip}
Artemis Panagopoulou, Le~Xue, Ning Yu, Junnan Li, Dongxu Li, Shafiq Joty, Ran Xu, Silvio Savarese, Caiming Xiong, and Juan~Carlos Niebles. 2024.
\newblock {X-InstructBLIP}: A framework for aligning x-modal instruction-aware representations to {LLMs} and emergent cross-modal reasoning.
\newblock In \emph{European Conference on Computer Vision (ECCV)}.
\newblock ArXiv:2311.18799.

\bibitem[{Piczak(2015)}]{piczak2015esc50}
Karol~J. Piczak. 2015.
\newblock Esc: Dataset for environmental sound classification.
\newblock In \emph{Proceedings of the 23rd ACM International Conference on Multimedia}.

\bibitem[{Radford et~al.(2023)Radford, Kim, Xu, Brockman, McLeavey, and Sutskever}]{radford2022whisper}
Alec Radford, Jong~Wook Kim, Tao Xu, Greg Brockman, Christine McLeavey, and Ilya Sutskever. 2023.
\newblock Robust speech recognition via large-scale weak supervision.
\newblock In \emph{Proceedings of the 40th International Conference on Machine Learning (ICML)}.
\newblock ArXiv:2212.04356.

\bibitem[{Rusu et~al.(2016)Rusu, Rabinowitz, Desjardins, Soyer, Kirkpatrick, Kavukcuoglu, Pascanu, and Hadsell}]{rusu2016pnn}
Andrei~A. Rusu, Neil~C. Rabinowitz, Guillaume Desjardins, Hubert Soyer, James Kirkpatrick, Koray Kavukcuoglu, Razvan Pascanu, and Raia Hadsell. 2016.
\newblock Progressive neural networks.
\newblock \emph{arXiv preprint arXiv:1606.04671}.

\bibitem[{Sheng et~al.(2024)Sheng, Cao, Li, Hooper, Lee, Yang, Chou, Zhu, Zheng, Keutzer, Gonzalez, and Stoica}]{sheng2024slora}
Ying Sheng, Shiyi Cao, Dacheng Li, Coleman Hooper, Nicholas Lee, Shuo Yang, Christopher Chou, Banghua Zhu, Lianmin Zheng, Kurt Keutzer, Joseph~E. Gonzalez, and Ion Stoica. 2024.
\newblock {S-LoRA}: Serving thousands of concurrent {LoRA} adapters.
\newblock In \emph{Proceedings of Machine Learning and Systems (MLSys)}.
\newblock ArXiv:2311.03285.

\bibitem[{Tonmoy(2026)}]{tonmoy2026axis}
Abdul~Basit Tonmoy. 2026.
\newblock Discriminative axis, not data volume: What a contrastive corpus teaches an audio embedding.
\newblock \emph{arXiv preprint arXiv:2608.01560}.

\bibitem[{Tonmoy et~al.(2026)Tonmoy, Hoque, Arham, and Luthra}]{tonmoy2026fusion}
Abdul~Basit Tonmoy, Kazi~Fardinul Hoque, Md. Shahrier~Islam Arham, and Arman Luthra. 2026.
\newblock Fusion embedding: A unified embedding space for text, image, video, and audio.
\newblock \emph{arXiv preprint arXiv:2607.18666}.

\bibitem[{van~den Oord et~al.(2018)van~den Oord, Li, and Vinyals}]{oord2018infonce}
A{\"a}ron van~den Oord, Yazhe Li, and Oriol Vinyals. 2018.
\newblock Representation learning with contrastive predictive coding.
\newblock \emph{arXiv preprint arXiv:1807.03748}.

\bibitem[{Wang et~al.(2023)Wang, Wang, Lin, Bai, Zhou, Zhou, Wang, and Zhou}]{wang2023onepeace}
Peng Wang, Shijie Wang, Junyang Lin, Shuai Bai, Xiaohuan Zhou, Jingren Zhou, Xinggang Wang, and Chang Zhou. 2023.
\newblock {ONE-PEACE}: Exploring one general representation model toward unlimited modalities.
\newblock \emph{arXiv preprint arXiv:2305.11172}.

\bibitem[{Wang et~al.(2024)Wang, Zhang, Cheng, Huang, Liu, Ye, Huang, Zhao, Jin, Gao, and Zhao}]{wang2024freebind}
Zehan Wang, Ziang Zhang, Xize Cheng, Rongjie Huang, Luping Liu, Zhenhui Ye, Haifeng Huang, Yang Zhao, Tao Jin, Peng Gao, and Zhou Zhao. 2024.
\newblock Freebind: Free lunch in unified multimodal space via knowledge fusion.
\newblock In \emph{International Conference on Machine Learning (ICML)}.
\newblock ArXiv:2405.04883.

\bibitem[{Wang et~al.(2025)Wang, Zhang, Hong, Zhang, Liu, Huang, Cheng, Ji, Jin, Zhao, and Zhao}]{wang2025omnibind}
Zehan Wang, Ziang Zhang, Minjie Hong, Hang Zhang, Luping Liu, Rongjie Huang, Xize Cheng, Shengpeng Ji, Tao Jin, Hengshuang Zhao, and Zhou Zhao. 2025.
\newblock Omnibind: Large-scale omni multimodal representation via binding spaces.
\newblock In \emph{International Conference on Learning Representations (ICLR)}.
\newblock ArXiv:2407.11895.

\bibitem[{Wu et~al.(2024)}]{wu2024llamapro}
Chengyue Wu et~al. 2024.
\newblock {LLaMA} pro: Progressive {LLaMA} with block expansion.
\newblock In \emph{Proceedings of the Annual Meeting of the Association for Computational Linguistics (ACL)}.
\newblock ArXiv:2401.02415.

\bibitem[{Wu et~al.(2023)Wu, Chen, Zhang, Hui, Berg-Kirkpatrick, and Dubnov}]{wu2023laionclap}
Yusong Wu, Ke~Chen, Tianyu Zhang, Yuchen Hui, Taylor Berg-Kirkpatrick, and Shlomo Dubnov. 2023.
\newblock Large-scale contrastive language-audio pretraining with feature fusion and keyword-to-caption augmentation.
\newblock In \emph{IEEE International Conference on Acoustics, Speech and Signal Processing (ICASSP)}.
\newblock ArXiv:2211.06687.

\bibitem[{Xu et~al.(2026)Xu, Thom{\'e}, Horak, Xie, and Zisserman}]{aurola2026}
Jilan Xu, Carl Thom{\'e}, Danijela Horak, Weidi Xie, and Andrew Zisserman. 2026.
\newblock Scaling audio-text retrieval with multimodal large language models.
\newblock \emph{arXiv preprint arXiv:2602.18010}.

\bibitem[{Xu et~al.(2025)}]{xu2025qwen25omni}
Jin Xu et~al. 2025.
\newblock Qwen2.5-omni technical report.
\newblock \emph{arXiv preprint arXiv:2503.20215}.

\bibitem[{Yoo et~al.(2026)Yoo, Shin, Lee, Koo, and Chang}]{omniembedaudio2026}
HaeJun Yoo, Yongseop Shin, Insung Lee, Myoung-Wan Koo, and Du-Seong Chang. 2026.
\newblock Omni-embed-audio: Leveraging multimodal llms for robust audio-text retrieval.
\newblock In \emph{Proceedings of the Annual Meeting of the Association for Computational Linguistics (ACL)}.
\newblock ArXiv:2604.18360.

\bibitem[{Zhang et~al.(2020)Zhang, Sax, Zamir, Guibas, and Malik}]{zhang2020sidetuning}
Jeffrey~O. Zhang, Alexander Sax, Amir Zamir, Leonidas Guibas, and Jitendra Malik. 2020.
\newblock Side-tuning: A baseline for network adaptation via additive side networks.
\newblock In \emph{European Conference on Computer Vision (ECCV)}.
\newblock ArXiv:1912.13503.

\end{thebibliography}
\end{document}